\newtheorem{theorem}{Theorem}
\theoremstyle{definition}
\newtheorem{assumption}{Assumption}
\definecolor{cvprblue}{rgb}{0.21,0.49,0.74}
\title{HiCoGen: Hierarchical Compositional Text-to-Image Generation \\ in Diffusion Models via Reinforcement Learning}
\def\spaces{}
\author{Hongji Yang\textsuperscript{1}, \spaces Yucheng Zhou\textsuperscript{1}, \spaces Wencheng Han\textsuperscript{1}, \\
\spaces Runzhou Tao\textsuperscript{2},  \spaces Zhongying Qiu\textsuperscript{2},  \spaces Jianfei Yang\textsuperscript{2},  \spaces Jianbing Shen\textsuperscript{1}\footnotemark[2]\\
\textsuperscript{1}{SKL-IOTSC, CIS, University of Macau} \\ 
\textsuperscript{2}{Zhejiang ZEEKR Automobile Research \& Development Co., Ltd.}\\
{\tt\small $\dagger$~Corresponding author}
}
\begin{document}
\maketitle
\vspace{-2mm}
\begin{abstract}
Recent advances in diffusion models have demonstrated impressive capability in generating high-quality images for simple prompts. However, when confronted with complex prompts involving multiple objects and hierarchical structures, existing models struggle to accurately follow instructions, leading to issues such as concept omission, confusion, and poor compositionality.
To address these limitations, we propose a Hierarchical Compositional Generative framework (\textbf{HiCoGen}) built upon a novel \textbf{Chain of Synthesis (CoS)} paradigm. Instead of monolithic generation, HiCoGen first leverages a Large Language Model (LLM) to decompose complex prompts into minimal semantic units. It then synthesizes these units iteratively, where the image generated in each step provides crucial visual context for the next, ensuring all textual concepts are faithfully constructed into the final scene.
To further optimize this process, we introduce a reinforcement learning (RL) framework. Crucially, we identify that the limited exploration of standard diffusion samplers hinders effective RL. We theoretically prove that sample diversity is maximized by concentrating stochasticity in the early generation stages and, based on this insight, propose a novel \textbf{Decaying Stochasticity Schedule} to enhance exploration. Our RL algorithm is then guided by a hierarchical reward mechanism that jointly evaluates the image at the global, subject, and relationship levels.
We also construct \textbf{HiCoPrompt}, a new text-to-image benchmark with hierarchical prompts for rigorous evaluation. Experiments show our approach significantly outperforms existing methods in both concept coverage and compositional accuracy.
\end{abstract}    
\vspace{-4mm}
\section{Introduction}
\vspace{-1mm}

\label{sec:intro}

\begin{figure}[htp]
    \centering
    \includegraphics[width=1\linewidth]{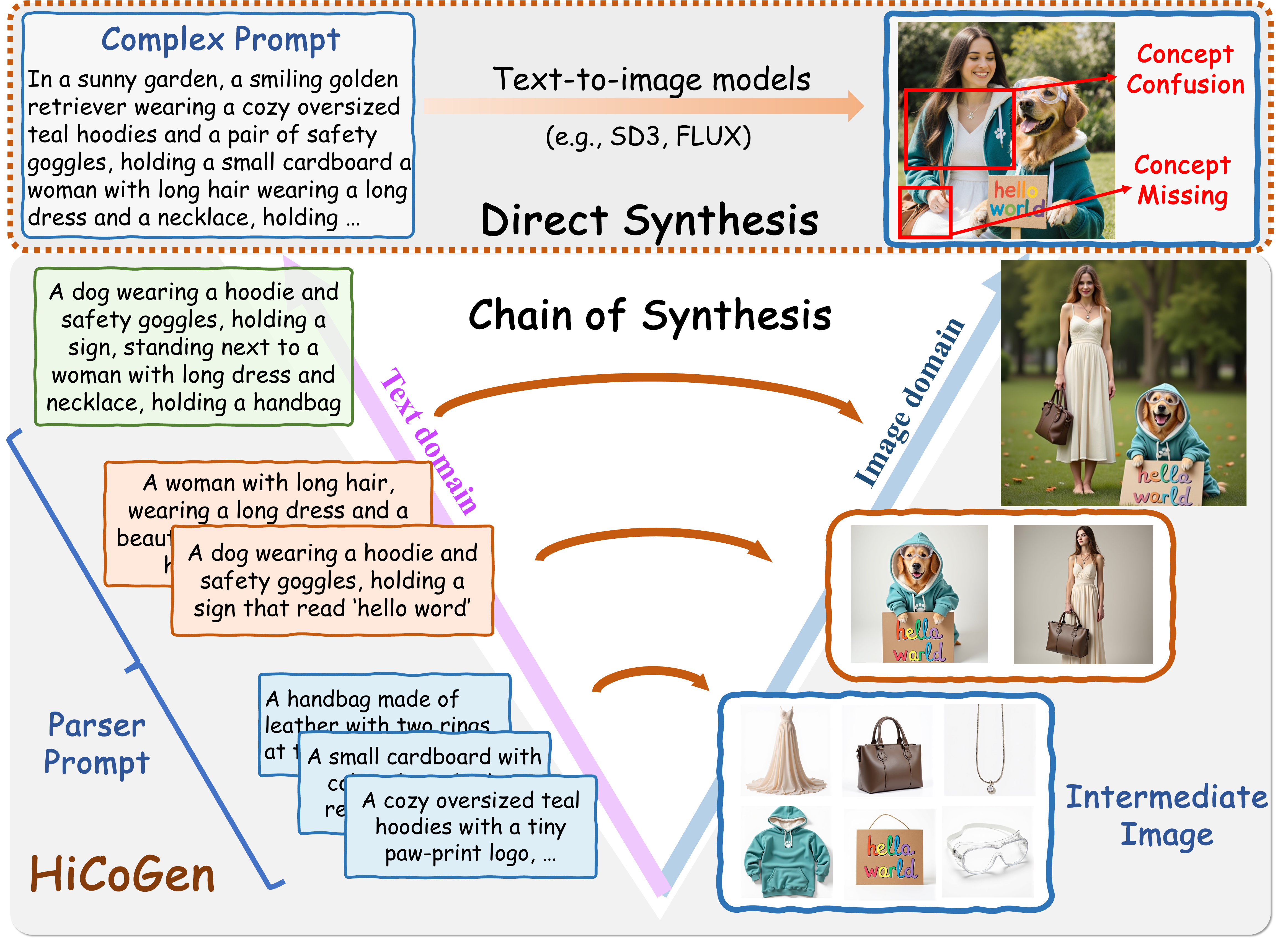}
    \vspace{-6mm}
    \caption{\textbf{The motivation of HiCoGen.} The semantic gap between text and images widens as the complexity of the text increases, particularly involving the prompts with a hierarchical relationship. While a single T2I model performs well in generating individual objects, it suffers from concept missing and confusion when processing complex prompts. HiCoGen employs a \textbf{Chain of Synthesis} for complex text to preserve the semantic content.}
    \vspace{-5mm}
    \label{fig:motivation}
\end{figure}

Diffusion models~\cite{ho2020denoising} have led to significant improvements in text-to-image (T2I) generation. Many T2I models~\cite{ saharia2022photorealistic, peebles2023scalable,rombach2022high,flux2024,flux1kreadev2025,esser2024scaling} have been proven to be able to produce high-quality and diverse images. 
However, existing image diffusion models fail to generate images that cover all concepts within long and complex prompts. Despite being equipped with more powerful text encoders (CLIP~\cite{radford2021learning}, T5-Encoder~\cite{raffel2020exploring}), the challenges of image generation with complex prompts remain unresolved.
Specifically, complex prompts contain extensive information, resulting in a large semantic gap between text and image domains, as shown in Fig.~\ref{fig:motivation}. This leads to several issues:
\ding{172}~\textbf{concept missing}, where the model overlooks parts of the prompt due to overly long textual descriptions;
\ding{173}~\textbf{concept confusion}, as the text encoder fails to capture hierarchical relationships, causing the model to rely on priors and generate incorrect interactions between objects;
\ding{174}~\textbf{overall quality decline}, where the visual fidelity of the generated image degrades.

In stark contrast, Large Language Models (LLMs) demonstrate powerful inference-time reasoning by scaling not only the amount of computation, but the \emph{reasoning process} itself: techniques such as Chain-of-Thought~\cite{wei2022chain} and reinforcement-style inference frameworks (e.g., Deepseek-R1~\cite{guo2025deepseek}) explicitly decompose a complex problem into a sequence of intermediate steps. This form of \emph{content-level} inference-time scaling stands apart from recent work on diffusion inference-time scaling~\cite{ma2025inference}, which allocates additional compute to the \emph{decoding process} (e.g., exploring more noises or sampling trajectories for a fixed holistic prompt) while keeping the textual input and its semantics unchanged. Inspired by the former line of work, we ask a different question: ``\emph{Instead of only scaling the sampling procedure of diffusion models, can we introduce a similar step-by-step, content-aware processing paradigm into visual synthesis itself to overcome the limitations of holistic generation?}''

To realize this principle in visual synthesis, we propose {\bf Chain of Synthesis (CoS)}, a new paradigm that replaces monolithic generation with a progressive, part-by-part construction process. Building on this paradigm, our framework, {\bf HiCoGen}, is the first system to realize this paradigm, engineered specifically for prompts with complex hierarchical and relational structures (e.g., ``\textit{one person holding an object standing next to another person who is also holding an object}''). Within HiCoGen, we first leverage an LLM to decompose the complex prompt into a set of minimal semantic units and rewrite the prompt for generation. 
The framework then synthesizes these units iteratively: the image generated in each step becomes the critical visual context for the next. This process constructs a visual compositional pathway, which progressively replaces entangled textual descriptions with concrete visual information, ensuring a faithful construction of the final complex scene.

Moreover, to optimize the fidelity of the CoS pipeline, we introduce a reinforcement learning (RL) framework based on GRPO \cite{guo2025deepseek}. However, effectively applying RL to diffusion models is a major challenge, as standard sampling methods offer a limited exploration space, hindering the discovery of optimal generation policies. To overcome this fundamental barrier, we first conduct a rigorous theoretical analysis of the stochastic sampling dynamics. We prove that sample diversity is critically dependent on early-stage stochasticity. Based on this insight, we introduce a \textbf{Decaying Stochasticity Schedule}, a novel sampling mechanism that provably maximizes exploration and is thus essential for enabling effective RL. Building on this enhanced sampler, our RL algorithm is guided by a novel \textbf{hierarchical reward mechanism}, which supplies global-, subject-, and relationship-level rewards for step-by-step supervision.


Finally, to facilitate rigorous and standardized evaluation of compositional reasoning in T2I models, a capability central to our work, we construct \textbf{HiCoPrompt}. It is a new, systematically generated benchmark featuring prompts with clearly defined hierarchical and compositional structures. Extensive experiments on HiCoPrompt validate that our HiCoGen significantly outperforms existing models.

Our contribution can be summarized as follows:
\begin{itemize}
\item We introduce a novel Chain of Synthesis (CoS) paradigm for compositional generation, and propose HiCoGen, the first framework to realize this step-by-step process for T2I models, overcoming limitations of monolithic methods.
\item Combined with our enhanced sampler and a GRPO-based algorithm, we design a novel hierarchical reward mechanism that provides fine-grained supervision on compositional aspects.
\item We identify the limited exploration of standard diffusion samplers as a key barrier to RL-based optimization. We prove that optimal sample diversity is achieved by concentrating stochasticity in the early generation stages and introduce a Decaying Stochasticity Schedule to realize it.
\item We construct the HiCoPrompt benchmark with explicit hierarchical structures to facilitate robust evaluation in compositional text-to-image generation.
\end{itemize}

\vspace{-1mm}
\section{Related Work}
\vspace{-1mm}
\label{Related Work}

\noindent\textbf{Text-to-Image Diffusion Models.}
From UNet-based diffusion models (e.g., LDM~\cite{rombach2022high}, SDXL~\cite{podell2023sdxl}) to the more powerful Transformer-based diffusion models (e.g., DiT~\cite{peebles2023scalable}, SD3~\cite{esser2024scaling}, FLUX~\cite{flux2024}), Text-to-image diffusion models have been proven to produce high-fidelity images.
However, they often fail to generate images under complex prompts involving multiple entities or subjects. To introduce more controllability into the diffusion model, recent work~\cite{zhang2023adding, ye2023ip, qin2023unicontrol, li2025controlnet, yang2025dc, ma2024hico} has included additional adapters or adjusted the prompt for a more accurate generation.

\noindent\textbf{Subject-driven Diffusion Models.}
While T2I models can generate diverse images of generic concepts, they still struggle to synthesize images of a specific subject provided by users. DreamBooth~\cite{ruiz2023dreambooth} and textual inversion~\cite{gal2023an} apply LoRA~\cite{hu2022lora} to insert a certain concept and perform a subject-driven generation. However, these methods only focus on the trained subject and are limited in their generalization. To improve generalization, recent works~\cite{wang2024ms, xiao2025omnigen, tan2025ominicontrol, wu2025less} have introduced the in-context generation framework for subject-driven generation or customized generation.
Nevertheless, In-Context Generation still suffers from missing the given concepts or confusing different concepts.

\noindent\textbf{Diffusion Reinforcement Learning.}
Inspired by the great success of LLMs finetuning using RL from Human Feedback (RLHF)~\cite{ouyang2022training}, DDPO~\cite{black2024training}, Diffusion-DPO~\cite{wallace2024diffusion} and DPOK~\cite{fan2023dpok} introduced Direct Preference Optimization (DPO)~\cite{rafailov2023direct} into diffusion models to align with human preferences. 
Recent works~\cite{yu2025genflowrl, guo2025can, black2023training, guo2025can} validate the improvement of the diffusion with reinforcement learning.
Motivated by the success of Deepseek-R1~\cite{guo2025deepseek} in training the thinking process with RL, Flow-GRPO~\cite{liu2025flow} and DanceGRPO~\cite{xue2025dancegrpo} employ GRPO~\cite{shao2024deepseekmath} to further enhance the performance of visual generation. 

\vspace{-1mm}
\section{Method}
\vspace{-1mm}
\label{Method}

\begin{figure*}
    \centering
    \includegraphics[width=1.0\linewidth]{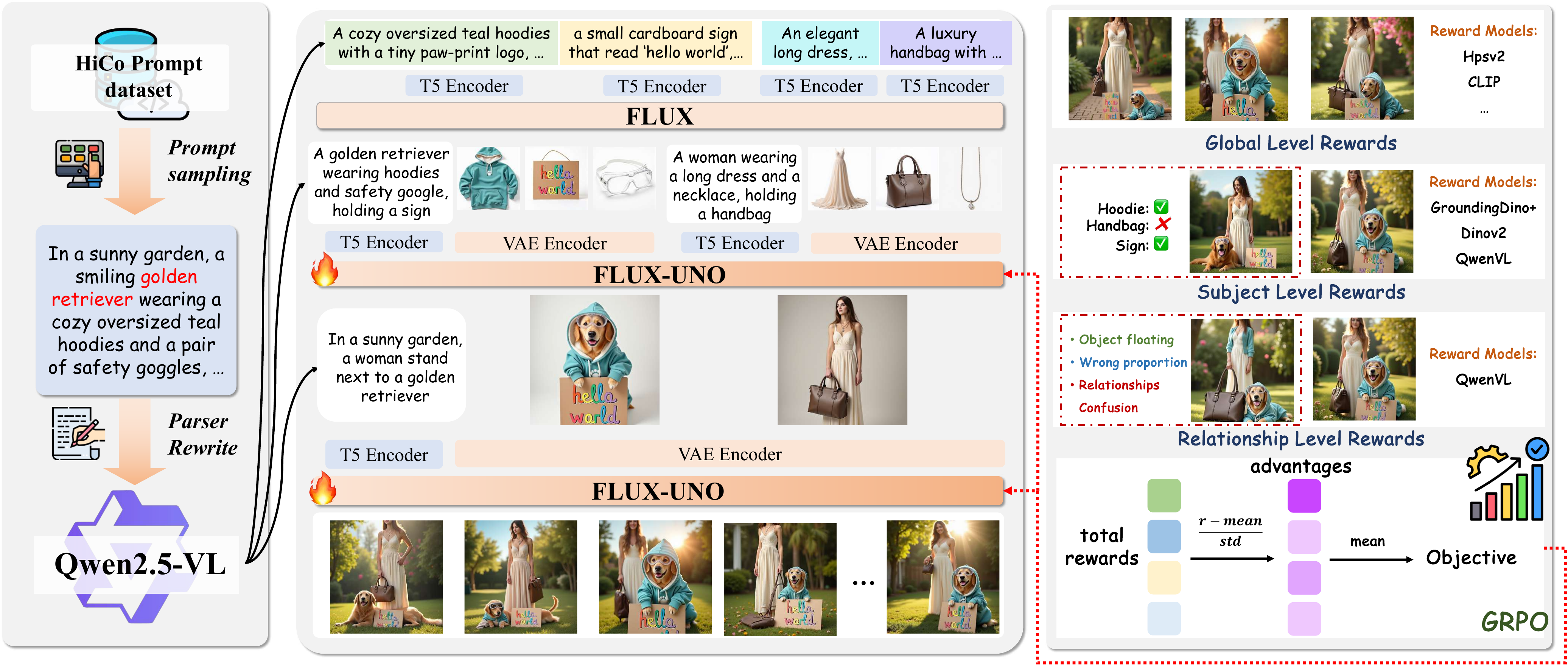}
    \vspace{-6mm}
    \caption{The overall pipeline of our proposed HiCoGen framework. When facing a complex hierarchical compositional prompt, HiCoGen applies the Chain of Synthesis to progressively construct the image part-by-part and employs in-context generative models to assemble the different components into the final image. This ensures all the concepts in the text domain are present in the image domain.}
    \vspace{-4mm}
    \label{fig:pipeline}
\end{figure*}

\subsection{Preliminary}
\vspace{-1mm}
\noindent\textbf{Diffusion Models.}
The forward process of diffusion is linearly adding a Gaussian noise item $\epsilon$ to the real data $z_{0}$:
\begin{equation}
    \text{\textbf{z}}_t=(1-t)\text{\textbf{z}}_0 + t\epsilon,
\end{equation}
where the $\text{\textbf{u}}=\epsilon-\text{\textbf{z}}_0$ is defined as the ``velocity'' in restrict flow~\cite{esser2024scaling}. To reach a lower noise level $\text{\textbf{s}}$, we have:
\begin{equation}
    \text{\textbf{z}}_s = \text{\textbf{z}}_t+\hat{\text{\textbf{u}}}\cdot (s-t)
\end{equation}
where the $\hat{\text{\textbf{u}}}$ is the denoising model output at timestep $t$. Given this output, the noisy latent can eventually be converted to a clear version.

\noindent\textbf{GRPO.}
Given a condition $c$, the generative models will sample multiple outputs $\{y_1, y_2,\dots,y_n\}$ from the model $\pi_{\theta_{old}}$ and compute a group of rewards $\{r_1,r_2,\dots,r_n\}$ with reward models. The advantage function $A_i$ for each output will be calculated as:
\begin{equation}
    A_i = \frac{r_i - \text{mean}(\{r_1, r_2, \dots, r_n\})}{\text{std}(\{r_1,r_2,\dots,r_n\})}
\end{equation}
Then, the policy model $\pi_\theta$ can be optimized by maximizing the following objective function:
\begin{equation}
\begin{aligned}
     & \mathcal{J}(\theta) = \mathbb{E}_{\{y_i\}^n_{i=1}\sim \pi_{\theta_{\text{old}}}(\cdot|\textbf{c}) , \text{\textbf{a}}_{t,i}\sim\pi_{\theta_{\text{old}}}{(\cdot|\text{\textbf{s}}_{t,i})}} \\
     & \left[\frac{1}{n}\sum_{i=1}^{n}\frac{1}{T}\sum_{t=1}^{T}\min\left(\rho_{t,i}A_i, \text{clip}(\rho_{t,i}, 1-\epsilon,1+\epsilon)A_i\right)\right],
\end{aligned}
\end{equation}

\vspace{-1mm}
\subsection{HiCoGen Framework}
\vspace{-1mm}
Diffusion models often struggle to follow long or complex text conditions, particularly when a prompt involves multiple complex subjects. 
The core idea of the HiCoGen Framework is to transform the intractable monolithic generative task into a chain of tractable single-subject generation and in-context composition tasks. 
Instead of demanding a single diffusion model to resolve complex compositional bindings in a single step, HiCoGen first generates a high-fidelity instance of a semantic unit and then iteratively integrates these results into the new scene, one by one, using the previously generated image as the representation of certain textual information, as shown in Fig.~\ref{fig:pipeline}.

\noindent\textbf{Parse\&Rewrite LLM.}
Given the complex input prompt $\mathcal{O}$, we first employ a semantic parser (e.g., LLMs) to decompose it into an ordered set of subject-centric components.
\begin{equation}
\begin{aligned}
    &\mathcal{O} = \{P^{(1)},P^{(2)},\cdots, P^{(n)}\}, \\
    &P^{(i)} = \{c^{(1)}, c^{(2)}, \cdots,c^{(m)}\},
\end{aligned}
\end{equation}
where $P_i$ is the sub-prompt for the $i$-th subject, and could be decomposed into multiple fine-grained component (or attributes) $c_j$. Then, LLM also rewrites the prompt at each level to make it feasible to generate an image. 

Each layer’s prompt provides detailed descriptions only for the specific subject or attribute at that level, while the higher layer includes only general categorical information without such details.

 
       

\noindent\textbf{Chain of Synthesis.}
To perform a CoS, a primary challenge is how to assemble the generated contents. For a text-to-image DiT model with multi-modal attention, the input $z$ is the concatenated result of the text token $c$ and the noisy latent $z_t$, and can be expressed as follows:
\begin{equation}
    z = \text{concat}(c, z_t),
\end{equation}
After generating the specific contents $\{z^0_0, z^1_0, \cdots z^m_0\}$ in the latent domain, we leave these generated contents to continue engaging in text-to-image generation. This can be formatted as follows:
\begin{equation}
    \hat{z} = \text{concat}(P^{(i)}, \hat{z_t}, z^0_0, z^1_0, \cdots z^m_0),
\end{equation}
Here, we create an input $\hat{z}$ with hierarchical information $\{z^0_0, z^1_0, \cdots z^m_0\}$ with text $P^{i}$, reducing the need for the original prompt description on specific contents. And this process can be continued until we generate an image that covers the entire prompt.
\begin{equation}
    z^{'} = \text{concat}(\mathcal{O}, z^{'}_t, \hat{z}^0_0, \hat{z}^1_0, \cdots \hat{z}^n_0),
\end{equation}
where $z_0$ denotes the clear latent at timestep $t=0$. 
This chained synthesis assembles all the generated results and also facilitates the hierarchical relationship generation.

\subsection{Hierarchical Reward}
\vspace{-1mm}
To provide a comprehensive and fine-grained reward signal for the quality and accuracy of diffusion models, we propose a novel hierarchical reward mechanism. It not only focuses on global text-to-image alignment or aesthetic quality but also aims at the image's internals to quantify the accuracy of key subjects and the relationships between them.
Our overall reward function, $R_\text{total}$, is a weighted sum of three core components: a global reward $R_\text{global}$, a subject-specific reward $R_\text{subject}$, and a subject relationship reward $R_\text{relationship}$.
\begin{equation}
    R_\text{total} = R_\text{global} + R_\text{subject} +  R_\text{relationship},
\end{equation}

\noindent\textbf{Global Reward.}
The global reward $R_\text{total}$ aims to assess the overall quality of the generated images from a holistic perspective. It consists of two main aspects: text-image alignment and human preference.
\begin{equation}
    R_{\text{global}}=w_\text{clip} \cdot S_\text{clip} + w_{hps} \cdot S_\text{hps}
\end{equation}
where $S_\text{clip}$ and$S_\text{hps}$ denote the image reward score from the CLIP model~\cite{radford2021learning} and human-preference score~\cite{wu2023better}. 

\noindent\textbf{Subject-Specific Reward.}
In many generation tasks, accurately depicting one or more key subjects is crucial. $R_{\text{subject}}$ focuses on evaluating the fidelity and attributes of these specific subjects.
For a specific subject, we first use the GroundingDino~\cite{liu2024grounding} to locate the positions of key subjects in the prompt, obtaining their bounding boxes, and then crop the subject.
After the corresponding subject is located, we assess the similarity between the cropped part and the previously generated intermediate image using the DINOv2~\cite{oquab2024dinov}. We compute the embedding vector using cosine similarity $\cos(\cdot, \cdot)$ and the process can be expressed as:
\begin{equation}
    S_\text{DINOv2}=\cos\left(\text{DINOv2}(I_\text{cropped}), \text{DINOv2}(I_\text{ref})\right)
\end{equation}
where $I_\text{cropped}$ and $I_\text{ref}$ denote the cropped image and the reference image, respectively.

In addition, many key attributes (such as color and pose) of the subject are difficult to capture solely through embedding similarity. For this, we utilize the powerful Vision-Language Models (VLM) as the rule-based reward model to perform the rewarding process. 
For the $N$ key subjects in the image, the subject reward $R_\text{subject}$ is an aggregation of the scores for each subject:
\begin{equation}
    R_{\text{subject}}=\frac{1}{N}\sum_{i=1}^N\left(w_\text{dino} \cdot S^{(i)}_{\text{DINOv2}} + w_{\text{vlm}} \cdot S^{(i)}_{\text{vlm}}\right)
\end{equation}
\noindent\textbf{Relationship Reward.}
Since the subject image is generated using an independent reference image, the following issues arise: 1) The rendered object appears to float or feel detached, with the interaction area appearing very stiff, resulting in an obvious pasted-on effect; 2) The object's proportions are significantly enlarged.

Therefore, the rewards focus on the relationship between the subject injection and the main content of the image, as well as whether the interaction is reasonable. We prompt VLM to focus on relationships between the given subjects. 
\begin{equation}
    R_{\text{relationship}}=\frac{1}{N}\sum_{i=1}^N\left(S^{(i)}_{\text{vlm}}\right)
\end{equation}
In the practical implementation, we make the VLM to verify the hierarchical relationships and the relative proportions.
Please refer to the supplementary for the prompts format.

\subsection{Decay Stochasticity Schedule}
\vspace{-1mm}
\noindent{\textbf{Sampling SDEs.}}
To enable exploration for reinforcement learning, we augment the standard reverse diffusion process with a controllable stochasticity term $\eta(t) \ge 0$. The resulting reverse-time SDE, evolving from $t=T$ to $0$, is:
\begin{equation}
    \!\!\!\text{d}\mathbf{z}_t \!\!=\!\! \left[\mathbf{f}(\mathbf{z}_t, t) \!-\! g(t)^2 \nabla_{\mathbf{z}_t}\log p_t(\mathbf{z}_t)\right]\text{d}t \!+\! g(t)\eta(t)\text{d}\mathbf{w}_t,\!\!
    \label{eq:reverse_sde}
\end{equation}
where $\mathbf{f}$ and $g$ are the drift and diffusion coefficients, and $\mathbf{w}_t$ is a standard Wiener process. Our objective is to determine the optimal schedule for $\eta(t)$ to maximize sample diversity.

\noindent{\textbf{Optimal Stochasticity Scheduling.}}
The denoising process has distinct temporal phases. Recent work~\cite{biroli2024dynamical} categorizes it into an early ``generalization time'' for establishing global structure and a later ``collapse time'' for refining details. This motivates concentrating exploration efforts early. We formalize this intuition by aiming to maximize sample diversity, quantified by $\text{Tr}(\text{Cov}(\mathbf{z}_0))$, under a fixed total stochasticity budget.

\begin{theorem}[Optimal Stochasticity Allocation for Diversity Maximization]
    \label{thm:optimal_schedule}
    Consider the reverse SDE in Eq.~\ref{eq:reverse_sde} with a fixed budget $\int_0^T \eta(t)^2 dt = C > 0$. Under Assumptions 1 and 2 below, the schedule $\eta(t)$ that maximizes the final sample diversity $\text{Tr}(\text{Cov}(\mathbf{z}_0))$ is a monotonically decreasing function of time $t$ (from $T$ to $0$).
\end{theorem}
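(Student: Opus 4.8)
The plan is to convert the stochastic dynamics into a deterministic evolution equation for the covariance, solve that in closed form, and thereby reduce the scheduling question to a one–dimensional allocation problem governed by a single ``influence weight''. Using Assumption~1, I would linearize the reverse drift $\mathbf{b}(\mathbf{z},t):=\mathbf{f}(\mathbf{z},t)-g(t)^2\nabla_{\mathbf{z}}\log p_t(\mathbf{z})$ about the probability–flow trajectory $\bar{\mathbf{z}}_t$; then $\mathbf{z}_t$ is, to the working order, Gaussian and its covariance $\Sigma_t:=\mathrm{Cov}(\mathbf{z}_t)$ satisfies the matrix Lyapunov equation
\begin{equation}
\dot{\Sigma}_t=A(t)\,\Sigma_t+\Sigma_t\,A(t)^{\top}+g(t)^2\eta(t)^2 I ,
\end{equation}
with $A(t):=\nabla_{\mathbf{z}}\mathbf{b}(\bar{\mathbf{z}}_t,t)$ independent of the schedule. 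Writing $\Phi(t,s)$ for the state–transition matrix of $\dot x=A(t)x$ and integrating from $t=T$ down to $t=0$,
\begin{equation}
\Sigma_0=\Phi(0,T)\,\Sigma_T\,\Phi(0,T)^{\top}+\int_0^T g(s)^2\eta(s)^2\,\Phi(0,s)\Phi(0,s)^{\top}\,\mathrm{d}s .
\end{equation}
Since the first term does not depend on $\eta$, taking the trace shows that maximizing $\mathrm{Tr}(\mathrm{Cov}(\mathbf{z}_0))$ is equivalent to maximizing $\int_0^T \eta(s)^2\,w(s)\,\mathrm{d}s$ subject to $\int_0^T\eta(s)^2\,\mathrm{d}s=C$, where the influence weight $w(s):=g(s)^2\|\Phi(0,s)\|_F^2$ measures how much a unit of stochasticity injected at time $s$ contributes to the terminal variance after being transported by the remaining deterministic flow.

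This objective is linear in the nonnegative noise–power density $\eta(s)^2$, so an optimizer allocates power to the times where $w$ is largest; with the natural pointwise normalization $\eta(s)^2\le\eta_{\max}^2$ it is the bang–bang profile $\eta(s)^2=\eta_{\max}^2\,\mathbf{1}\{w(s)\ge\lambda\}$ with $\lambda$ fixed by the budget. Hence it suffices to prove that $w$ is monotone in $s$ on $[0,T]$, which is where Assumption~2 enters. The key mechanism is the ``generalization–then–collapse'' structure of the reverse process: the deterministic reverse flow is mildly expansive near the noisy end $t=T$ and strongly contractive near the data end $t=0$ (the ``collapse time''). Via the identity $\tfrac{\mathrm{d}}{\mathrm{d}s}\!\big(\Phi(0,s)\Phi(0,s)^{\top}\big)=-\Phi(0,s)\big(A(s)+A(s)^{\top}\big)\Phi(0,s)^{\top}$, this says $\|\Phi(0,s)\|_F^2$ drops steeply across the collapse interval and recovers only gently afterward; combined with the endpoint behaviour of $g$ prescribed by Assumption~2 (which suppresses the contribution of small $s$), one obtains that $w$ is non-decreasing in $s$, i.e. non-increasing along generation time. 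Therefore the optimal $\eta$ is concentrated near $t=T$ and decays toward $t=0$, which is the claim.

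The main obstacle is precisely the monotonicity of $w$. The reverse flow is \emph{not} globally contractive, so $\|\Phi(0,s)\|_F$ cannot be dominated by a single decaying exponential and is itself generally non-monotone (falling through the collapse window, then rising again through the expansive phase); the argument must therefore use Assumption~2 \emph{quantitatively} — locating the collapse time and controlling the relative sizes of the contraction and expansion rates and of $g$ near the endpoints — so that the product $w=g^2\|\Phi\|_F^2$ comes out monotone. A secondary point to state carefully is the admissible class of schedules: a linear functional under a single integral constraint is literally maximized by a point mass, so ``monotonically decreasing'' is the correct description precisely for bounded (or otherwise regularized) schedules, and the theorem and Assumption~2 should be phrased accordingly. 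The remaining ingredients — the Lyapunov solution, the trace bookkeeping, and the continuous–knapsack characterization of the optimizer — are routine.
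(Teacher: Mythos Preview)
Your approach matches the paper's: linearize to obtain the Lyapunov equation, solve for $\Sigma_0$ as a weighted integral of $\eta(s)^2$ against $W(s)=g(s)^2\|\Phi(0,s)\|_F^2$, then use Assumption~2 to argue that $W$ is monotone so the optimal schedule front-loads the stochasticity. Two small divergences worth noting: the paper simply asserts the optimum ``allocates proportionally to $W$'' without your (more careful) bang--bang/point-mass discussion, and its Assumption~2 states only that $A_t$ becomes increasingly contractive as $t\to 0$---nothing about an expansive phase near $T$ or about the endpoint behaviour of $g$---so your monotonicity paragraph reads more into the assumption than the paper itself does.
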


\begin{assumption}[Linearizability]
\label{ass:linear}
The SDE dynamics can be analyzed by linearizing around a mean deterministic trajectory $\bar{\mathbf{z}}_t$.
\end{assumption}

\begin{assumption}[Increasing Contractivity]
\label{ass:contract}
The Jacobian of the drift, $\mathbf{A}_t$, becomes increasingly contractive as $t \to 0$. This reflects the increasingly dominant role of the score function in collapsing the latents onto the data manifold, transitioning from ``generalization time'' to ``collapse time''.
\end{assumption}

\begin{proof}[Proof of Theorem~\ref{thm:optimal_schedule}]
    Under Assumption \ref{ass:linear}, the covariance matrix $\mathbf{\Sigma}_t = \text{Cov}(\mathbf{z}_t)$ evolves according to the Lyapunov equation: $\frac{d\mathbf{\Sigma}_t}{dt} = \mathbf{A}_t \mathbf{\Sigma}_t + \mathbf{\Sigma}_t \mathbf{A}_t^\top + g(t)^2 \eta(t)^2 \mathbf{I}$.
    The final covariance, starting from $\mathbf{\Sigma}_T = \mathbf{0}$, is the solution integrated backwards in time:
    \begin{equation}
        \mathbf{\Sigma}_0 = \int_0^T \mathbf{\Phi}(0, s) \left( g(s)^2 \eta(s)^2 \mathbf{I} \right) \mathbf{\Phi}(0, s)^\top ds,
    \end{equation}
    where $\mathbf{\Phi}(t, s)$ is the state transition matrix for the linearized system. We thus solve the variational problem:
    \begin{equation}
        \max_{\eta(t)^2 \ge 0} \int_0^T \eta(s)^2 W(s) ds, \quad \text{s.t.} \int_0^T \eta(s)^2 ds = C,
    \end{equation}
    where the weight $W(s) = \text{Tr}\left( g(s)^2 \mathbf{\Phi}(0, s) \mathbf{\Phi}(0, s)^\top \right)$. The optimal solution allocates the budget proportionally to the weight $W(s)$.
    
    The weight $W(s)$ quantifies the impact of a perturbation at time $s$ on the final variance. Per Assumption \ref{ass:contract}, the system's contractivity increases as time progresses to $0$. This means the dynamics suppress perturbations more aggressively at later times. Consequently, the propagator $\mathbf{\Phi}(0, s)$ experiences less total suppression for larger $s$ (earlier times), implying $W(s)$ is a monotonically decreasing function as $s$ goes from $T$ to $0$.
    To maximize the objective, the budget $\eta(s)^2$ must be allocated where $W(s)$ is highest, i.e., at earlier times. Therefore, the optimal schedule $\eta(t)$ is monotonically decreasing.
\end{proof}
In our framework, we instantiate this principle on Rectified Flow. We augment its deterministic ODE with our principled stochastic schedule, implemented as a cosine decay:
\begin{align}
    \eta(t) &=\! \eta_{\min} \!+\! \frac{1}{2}(\eta_{\max}\!-\!\eta_{\min})\!\left(1 \!+\! \cos\!{\left(\!\frac{\pi (T_{\max}\!-\!t)}{T_{\max}}\!\right)}\!\!\right),\notag\\
    &t \in [0, T_{\max}].
\end{align}
This schedule focuses exploration on the crucial early stages of structural formation while preserving fidelity during the final refinement phase, thereby providing an optimal solution to the exploration-exploitation trade-off.

\vspace{-1mm}
\section{HiCoPrompt Dataset}
\vspace{-1mm}
\label{Dataset}

\begin{figure}
    \centering
    \vspace{-2mm}
    \includegraphics[width=1.0\linewidth]{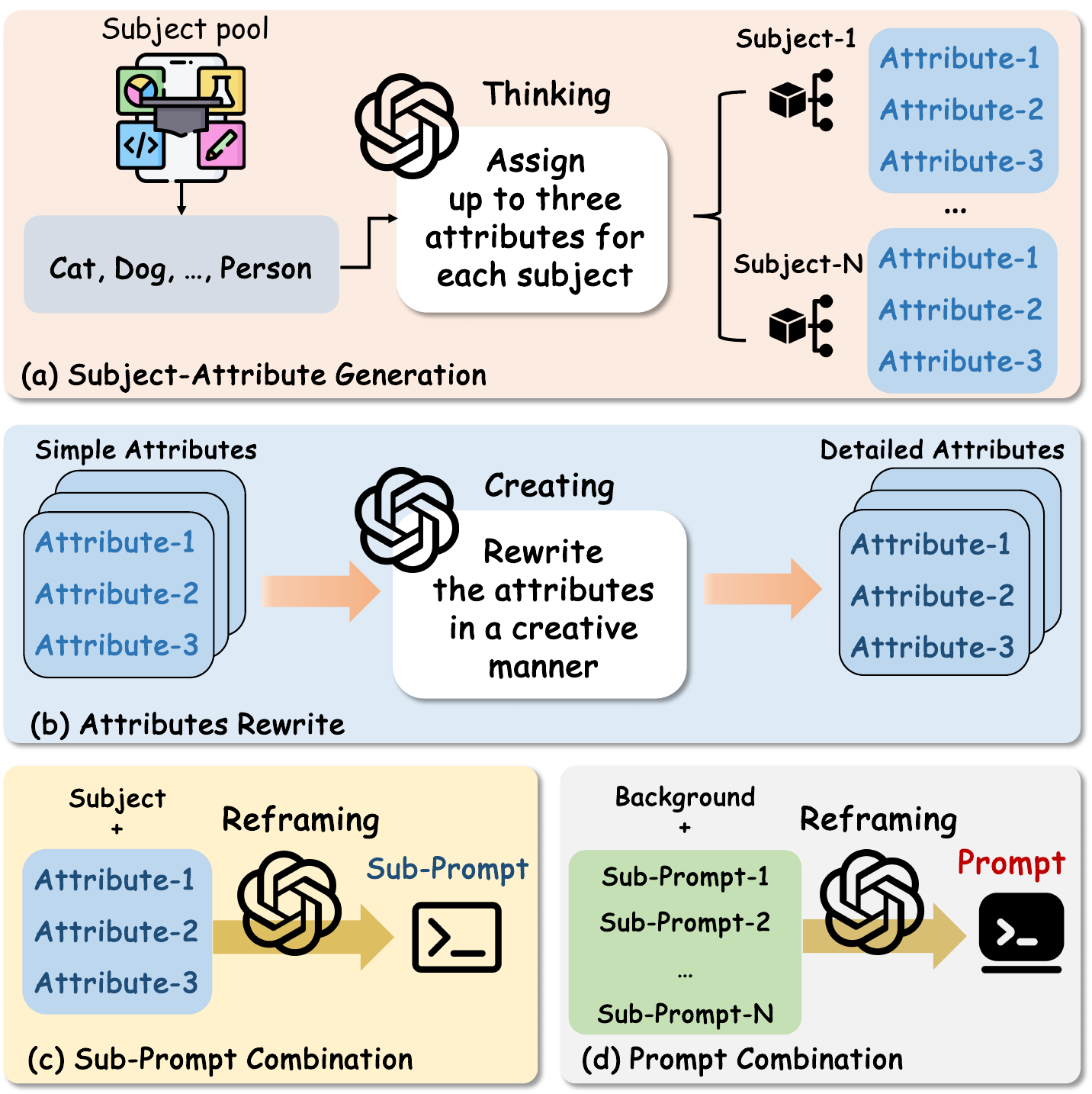}
    \vspace{-7mm}
    \caption{Illustration of our proposed HiCoPrompt dataset. It features clearly defined multiple hierarchical relationships, as well as thoroughly described concrete subjects and attributes.}
    \vspace{-5mm}
    \label{fig:hico_dataset}
\end{figure}

To construct a hierarchical compositional prompt dataset for challenging text-to-image generation, we propose the HiCoPrompt dataset. 
This dataset defines multiple subjects with multiple attributes for a single image, which is absent in other existing T2I benchmarks. Specifically, each prompt in the HiCoPrompt describes multiple subjects with structured specific attributes (e.g., clothing or holding). Each attribute is modified by descriptive qualifiers. The comparison with other T2I benchmarks is present in Tab.~\ref{tab:dataset}.
The benchmark comprises 3k prompts for testing and 12k prompts for RL training. The dataset will be available.
 
\noindent{\textbf{Subject Generation.}}
The initial phase focuses on establishing a foundation of core elements. We first select a diverse set of Primary Subjects (e.g., cat, king, astronaut) chosen for their high potential for visual representation and modification. For each subject, we employ ChatGPT~\cite{chatgpt_communication} to generate essential quantifiable core attributes. These attributes serve as standard slots for injecting descriptive details, thereby ensuring their richness and uniqueness.

\noindent{\textbf{Attribute Rewriting.}}
The second phase introduces descriptive variation by applying different modifiers and detailed descriptions. We ask the LLM to rewrite the attribute in an imaginative form. 
The rewriting transforms a simple category into a specific object with complex concepts. 

\noindent{\textbf{Sub-Prompt Reframing.}}
The full composite prompt is assembled by combining multiple subjects, each with a structured mix of these attributes. We require the LLM to construct the final prompt by incorporating the necessary prepositions or actions between subjects to integrate the distinct sentences.
The construction is present in Fig.~\ref{fig:hico_dataset}


\vspace{-1mm}
\section{Experiment}
\vspace{-1mm}
\label{Result}

\begin{table*}
\setlength{\tabcolsep}{3pt}
    \vspace{-3mm}
    \centering
    \resizebox{\textwidth}{!}{
    \setlength{\tabcolsep}{6pt}
    \begin{tabular}{lccccccc}
    \toprule
    \textbf{Benchmark} & \textbf{Prompts} & \textbf{Sources}  & \textbf{\#Subjects/Prompt} & \textbf{Subject Description} & \textbf{Hierarchy} & \textbf{Composition} & \textbf{Length} \\ 
    \midrule
      DrawBench~\cite{otani2023toward}& ~~~200 & Human & 1-4 & Simple & $\times$ & $\times$ & ~~15 \\
      T2I-CompBench~\cite{huang2023t2i}  & 5,000 & Template & 1-4 & Simple & $\times$ & $\times$ & ~~10 \\ 
      T2I-CompBench++~\cite{huang2025t2i}  & 8,000 & Template & 1-4 & Simple & $\times$ & $\times$ & ~~14 \\
      DPG-Bench~\cite{hu2024ella} & 1,065 & Template & 1-3 & Simple & $\times$ & $\times$ & ~~84 \\
      LongBench-T2I~\cite{zhou2025draw}  & ~~~500 & LLM & 9 & Detailed & $\times$ & $\checkmark$ & 683 \\
      \rowcolor{gray!15}\textbf{HiCoPrompt(Ours)} & 3,000 & LLM & 4-12 & Detailed & $\checkmark$ & $\checkmark$ & 219 \\
    \bottomrule
    \end{tabular}}
    \vspace{-3mm}
    \caption{Comparison of other T2I benchmarks. The primary challenge of HiCoPrompt compared to other datasets lies in its detailed and specific descriptions for each subject, ensuring that even subjects within the same category exhibit distinct characteristics. At the same time, there exists a clear hierarchical relationship between subjects.}
    \label{tab:dataset}
    \vspace{-1mm}
\end{table*}

\subsection{Experimental Setup}
\vspace{-1mm}
We employ our experiment with flow-based diffusion models FLUX and the UNO LoRA weights for subject-driven generation. All experiments are conducted with 8 A100 80G GPUs. We employ the AdamW optimizer with a fixed learning rate of 1e-4 for fine-tuning. The rank and alpha of the LoRA are both set to 512, and the number of samples is set to 16. We employ Qwen2.5-VL-3B as both the parse and rewrite LLM and the reward model. The $\eta_{max}$ is set to 1.0 in our decaying stochasticity scheduler.
Please refer to the supplementary for more details.

\begin{table*}\small
    \setlength{\tabcolsep}{10pt}
    \centering
    \resizebox{\textwidth}{!}{
    \setlength{\tabcolsep}{15pt}
    \begin{tabular}{lccccccc}
    \toprule
    \textbf{Methods} & \textbf{$\text{Acc}_{\text{exist}}\uparrow$}  & \textbf{$\text{Acc}_{\text{attribute}}\uparrow$} & \textbf{$\text{Acc}_{\text{relationship}}\uparrow$} & \textbf{CLIP Score}$\uparrow$ & \textbf{HPSv2}$\uparrow$ \\ \midrule
    SDXL~\cite{podell2023sdxl} & 0.2672 & 0.0825 & 0.0760 & 0.2676 & 0.2379 \\
    Stable-Diffusion-3~\cite{esser2024scaling} & 0.3669 & 0.2991 & 0.3026 & 0.2774 & 0.2810  \\
    FLUX.1-dev~\cite{flux2024} & 0.4456 & 0.3805 & 0.4035 & 0.2628 & 0.2974 \\
    FLUX.1-Krea~\cite{flux1kreadev2025} & 0.5172 & 0.5992 & 0.6400 & 0.2641 & 0.2952 \\
    Qwen-Image~\cite{wu2025qwenimagetechnicalreport} & 0.6292 & 0.6907 & 0.7829 & 0.2687 & 0.2913 \\
    \midrule
    MS-Diffusion~\cite{wang2024ms} & 0.4552 & 0.1274 & 0.1960 & 0.2556 & 0.2330 \\
    OminiGen~\cite{xiao2025omnigen} & 0.5818 & 0.6791 & 0.6789 & 0.2646 & 0.2893 \\
    OminiControl~\cite{tan2025ominicontrol} & 0.4032 &0.6583 & 0.3044 & 0.2607 & 0.2886 \\
    UNO~\cite{wu2025less} & 0.4400 & 0.6923 & 0.3697 & 0.2691 & 0.2698 \\
    \midrule
    \rowcolor{gray!15}\textbf{HiCoGen(ours)} & \textbf{0.7127} & \textbf{0.7673} & \textbf{0.8203} & \textbf{0.3192} & \textbf{0.3357} \\

    \bottomrule
    \end{tabular}}
    \vspace{-3mm}
    \caption{Comparison of the proposed HiCoGen and other text-to-image models. HiCoGen outperforms other text-to-image models or subject-driven generative models in all metrics.}
    \label{tab:main_exp}
    \vspace{-1mm}
\end{table*}

\subsection{Evaluation Metrics}
\vspace{-1mm}
To further investigate how the RL fine-tuning improves the proposed HiCoGen, we design three corresponding metrics to evaluate the results based on GPT-4o~\cite{chatgpt_communication}. We prompt the GPT-4o to perform reasoning on three dimensions: 1) Does the subject/attribute exist? 2) Does the attribute align with the given description in detail? 3) Are the subject's relationships (e.g., interaction, proportion) accurate? 
In addition, we employ the commonly used CLIP score~\cite{radford2021learning} and HPSv2~\cite{wu2023better} score for comparison.

\subsection{Quantitative Results}
\vspace{-1mm}

We primarily compare our HiCoGen with two categories of methods: end-to-end text-to-image generative models (e.g., SDXL~\cite{podell2023sdxl}, SD3~\cite{esser2024scaling}, FLUX~\cite{flux2024}) and subject-driven text-to-image generative models (e.g., MS-Diffusion~\cite{wang2024ms}, UNO~\cite{wu2025less}). For these subject-driven models that only support single-subject generation, we employed a multi-step approach to inject the new subject with the prompt into the image one by one.
As shown in Tab.~\ref{tab:main_exp}, HiCoGen outperforms other text-to-image models in all metrics. For subjects existing and attributes accuracy, HiCoGen achieves a 9\% improvement compared to Qwen-Image, which is known for handling complex prompts.
HiCoGen also surpasses FLUX and Qwen-Image in the visual appeal metric HPSv2 by 0.04.
It is worth noting that using a subject-driven generative model (e.g., UNO) in such tasks preserves attribute details in the prompt better than other text-to-image models, although concept missing still occurs.

\begin{figure*}
    \centering
    \includegraphics[width=1\linewidth]{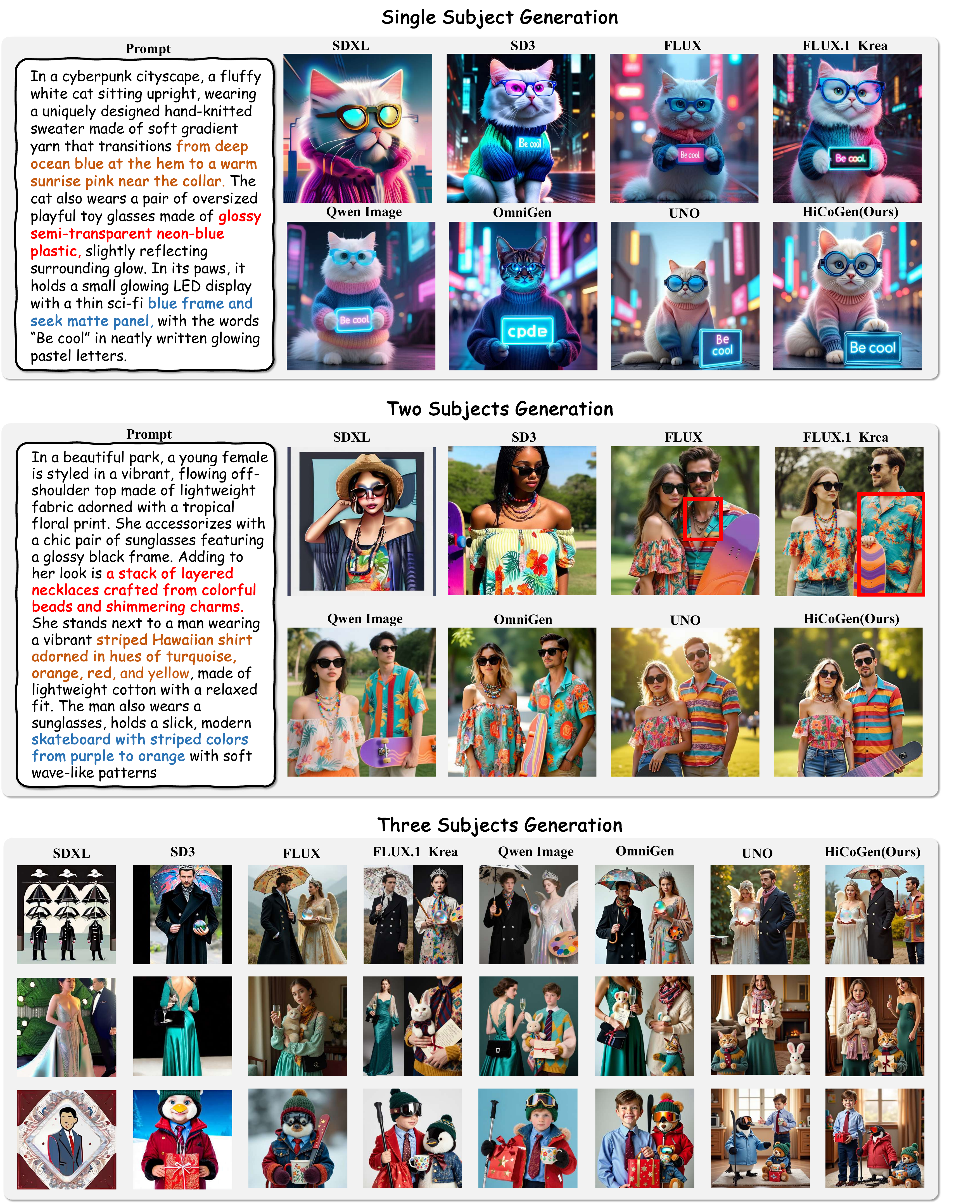}
    \vspace{-3mm}
    \caption{The visual result of our HiCoGen. HiCoGen greatly mitigates the issue of concept missing or confusion in image generation. When handling prompts with clear hierarchical relationships and multiple complex subjects, it significantly outperforms other T2I models.}
    \label{fig:visual_result}
\end{figure*}


\begin{table*}\small
    \vspace{-1mm}
    \setlength{\tabcolsep}{20pt}
    \centering
    \resizebox{\textwidth}{!}{
    \setlength{\tabcolsep}{10pt}
    \begin{tabular}{ccc|ccccc}
    \toprule
    \multicolumn{3}{c|}{\textbf{Reward}} & \multirow{2}{*}{\textbf{$\text{Acc}_{\text{exist}}\uparrow$}}  & \multirow{ 2}{*}{\textbf{$\text{Acc}_{\text{attribute}}\uparrow$}} & \multirow{ 2}{*}{\textbf{$\text{Acc}_{\text{relationship}}\uparrow$}} & \multirow{ 2}{*}{\textbf{CLIP Score}$\uparrow$} & \multirow{2}{*}{\textbf{HPSv2}$\uparrow$} \\ \cmidrule{1-3}
    Global-Level & Subject-Level & Relationship-Level  &  &  &  &  & \\ \midrule
    $\checkmark$  & $\checkmark$ & $\checkmark$  & \textbf{0.7128} & 0.7672 & \textbf{0.8202} & 0.3192 & \textbf{0.3357}  \\
    & $\checkmark$  & $\checkmark$ & 0.6828 & \textbf{0.7743} & 0.7810 & 0.2670 & 0.2720 \\
    $\checkmark$  &  & $\checkmark$ & 0.6924 & 0.6206 & 0.7164 & \textbf{0.3208} & 0.3346 \\
     $\checkmark$ & $\checkmark$  &  & 0.7038& 0.7558 & 0.7689 & 0.3106 & 0.3327 \\ 
      \bottomrule
    \end{tabular}}
    \vspace{-3mm}
    \caption{The ablation studies of using different rewards in HiCoGen.}
    \vspace{-4mm}
    \label{tab:ablation_reward}
\end{table*}

\subsection{Qualitative Results}
\vspace{-1mm}

In Fig.~\ref{fig:visual_result}, we present visual results comparing our HiCoGen with other text-to-image methods. 
The conventional T2I model is good at depicting a single subject. Both FLUX and FLUX-Krea achieve good visual results in the HiCoPrompt. However, there are some inconsistencies between the image and the given prompt. For example, they fail to realize the description ``\textit{from deep ocean blue at the hem to a warm sunrise pink near the collar}'' and FLUX-Krea incorrectly adds an extra paw of the cat to ``\textit{hold the LED board.}''
When dealing with multiple subjects, text-to-image models tend to produce images with conceptual confusion. For example, FLUX-Krea~\cite{flux1kreadev2025} and Qwen-Image~\cite{wu2025qwenimagetechnicalreport} incorrectly apply the ``\textit{floral clothing}'' attribute to a different subject, or FLUX assigns the ``\textit{stacked necklaces}'' to the wrong subject. 


\subsection{Ablation Study}
\vspace{-1mm}


\begin{table}\footnotesize
    \centering
    \setlength{\tabcolsep}{3pt}
    \resizebox{\linewidth}{!}{
    \begin{tabular}{l|ccc|ccc|ccc}
    \toprule
        \multirow{2}{*}{\textbf{Models}} & \multicolumn{3}{c}{\textbf{$\text{Acc}_{\text{exist}}\uparrow$}}  & \multicolumn{3}{c}{\textbf{$\text{Acc}_{\text{attribute}}\uparrow$}}&\multicolumn{3}{c}{\textbf{$\text{Acc}_{\text{relationship}}\uparrow$}} \\ \cmidrule{2-10}
        ~ & 1 & 2 & 3 & 1 & 2 & 3 & 1 & 2 & 3 \\ \midrule 
        Qwen-Image & \textbf{0.93} & 0.80 & 0.36 & 0.93 &  0.61 & 0.54 & \textbf{0.89}  & 0.85  &  0.61 \\
        UNO & 0.62 & 0.42  & 0.28 & 0.94 & 0.62  & 0.53  & 0.52  & 0.34  & 0.25  \\ \midrule
        \rowcolor{gray!15}\textbf{HiCoGen} & 0.92 & \textbf{0.81} & \textbf{0.41} & \textbf{0.95} & \textbf{0.74} & \textbf{0.62} & \textbf{0.89} & \textbf{0.86} & \textbf{0.71} \\
    \bottomrule
    \end{tabular}}
    \vspace{-3mm}
    \caption{Different number of subjects in generated images.}
    \vspace{-5mm}
    \label{tab:num_subject}
\end{table}

For the ablation studies, we conduct experiments on our HiCoGen to analyze the contributions of different reward components. As shown in Tab.~\ref{tab:ablation_reward}, the results show that combining global, subject, and relationship rewards yields the best overall performance. Removing the subject reward significantly decreases attribute accuracy (about 14\% decline), while removing the relationship reward weakens relationship construction (about 4\% reduction in accuracy).

\vspace{-1mm}
\subsection{Discussion}
\vspace{-1mm}

\noindent\textbf{The accuracy of the generated subject.}
In qualitative results, we have preliminarily demonstrated that the more subjects involved within the prompt, the more challenging for the generative model to generate the corresponding images. When focused on generating 1-2 subjects, the quality and accuracy of the generated images are both excellent. However, when the number of subjects increases to 3, the accuracy shows a significant decline (51\% in existing accuracy and 33.5\% in attributes). Despite this, HiCoGen still outperforms Qwen-Image and UNO, as shown in Tab~\ref{tab:num_subject}. 

\begin{figure}
    \centering
    \vspace{-1mm}
    \includegraphics[width=1.0\linewidth]{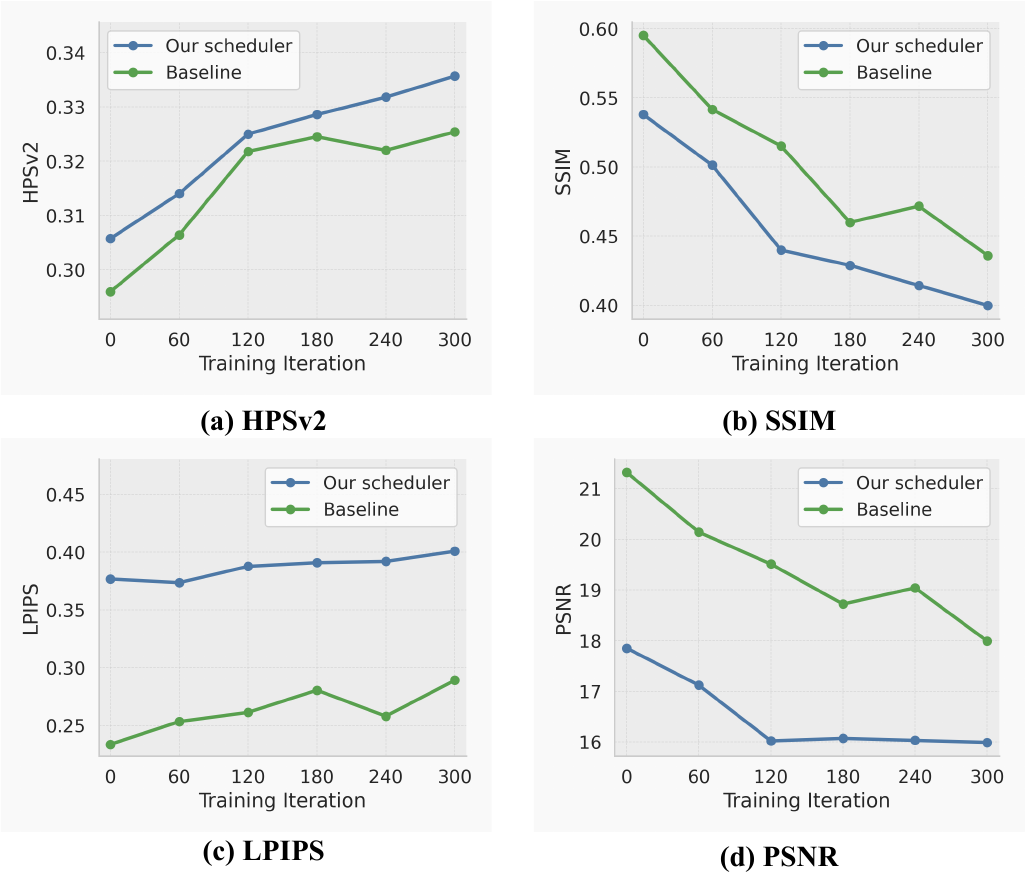}
    \vspace{-7mm}
    \caption{Similarity between samples in diffusion GRPO during the training process. The samples obtained from baseline are highly similar, which reduces the diversity of the samples.}
    \label{fig:sim}
    \vspace{-4mm}
\end{figure}

\noindent\textbf{Stochasticity in Diffusion RL.}
In Fig.~\ref{fig:sim}, we give the curve of similarity between the sampled results using different stochasticity strategies. We employ SSIM, PSNR and LPIPS to measure the similarity between different samples, as well as the result of the trained models.
Note that the lower values of the SSIM and PSNR indicate lower similarity between the sampled results.
Compared to the baseline, the image sampled from our pipeline has lower similarity, which means the diversity of the samples, thus leading to a higher value of HPSv2 score.

\section{Conclusion}
\vspace{-1mm}
\label{Conclusion}
Text-to-image diffusion models often fail to represent all concepts in long, complex prompts. In this study, we propose HiCoGen, which applies Chain of Synthesis to alleviate the difficulty of the model in generating an image from complex text.
It applies an LLM to parse and rewrite the complex prompt into multiple sub-prompts to guide each intermediate image generation. Then, these intermediate images are fed to a subject-driven generative model with the prompt to assemble these concepts until all the concepts in the complex prompt are performed in the final image. To ensure accurate CoS, we introduce a hierarchical reward for diffusion RL fine-tuning, which provides rewards at the global-level, subject-level, and relationship-level. In addition, we prove that optimal sample diversity is achieved by concentrating stochasticity in the early generation stages.
Finally, we propose a new text-to-image benchmark, HiCoPrompt, which challenges text-to-image models to produce images with all the concepts appearing in complex prompts. Experiments on the HiCoPrompt demonstrate the effectiveness of HiCoGen.
{
    \small
    \bibliographystyle{ieeenat_fullname}
    \bibliography{main}
}

\clearpage
\setcounter{page}{1}

\twocolumn[{%
\renewcommand\twocolumn[1][]{#1}%
\maketitlesupplementary
\begin{center}
    \centering
    \captionsetup{type=figure}
    \includegraphics[width=1\linewidth]{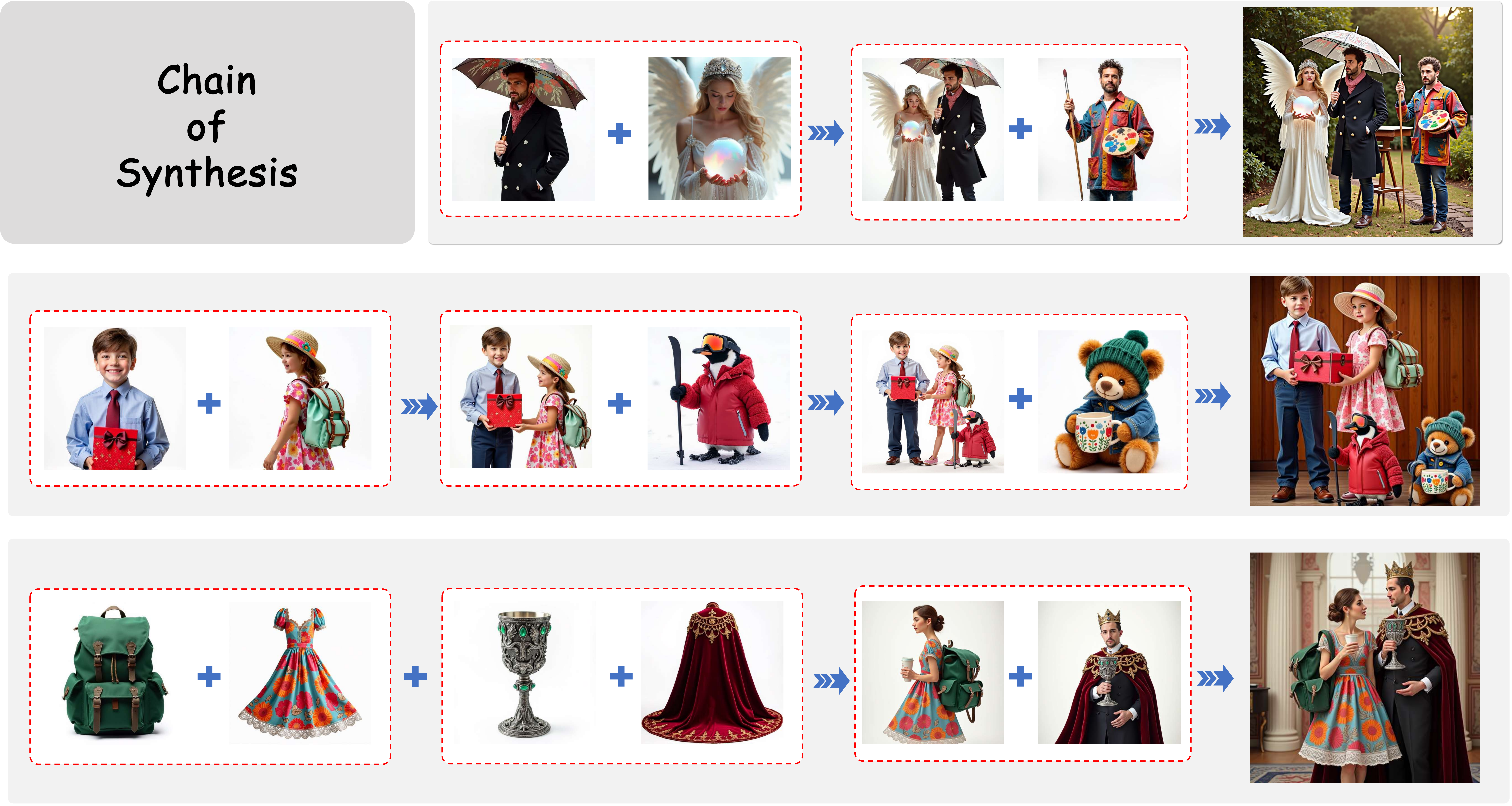}
    \caption{The intermediate results of the Chain of Synthesis.}
    \label{fig:chain_of_synthesis}
\end{center}

\begin{center}
    \centering
    \captionsetup{type=figure}
    \includegraphics[width=1\linewidth]{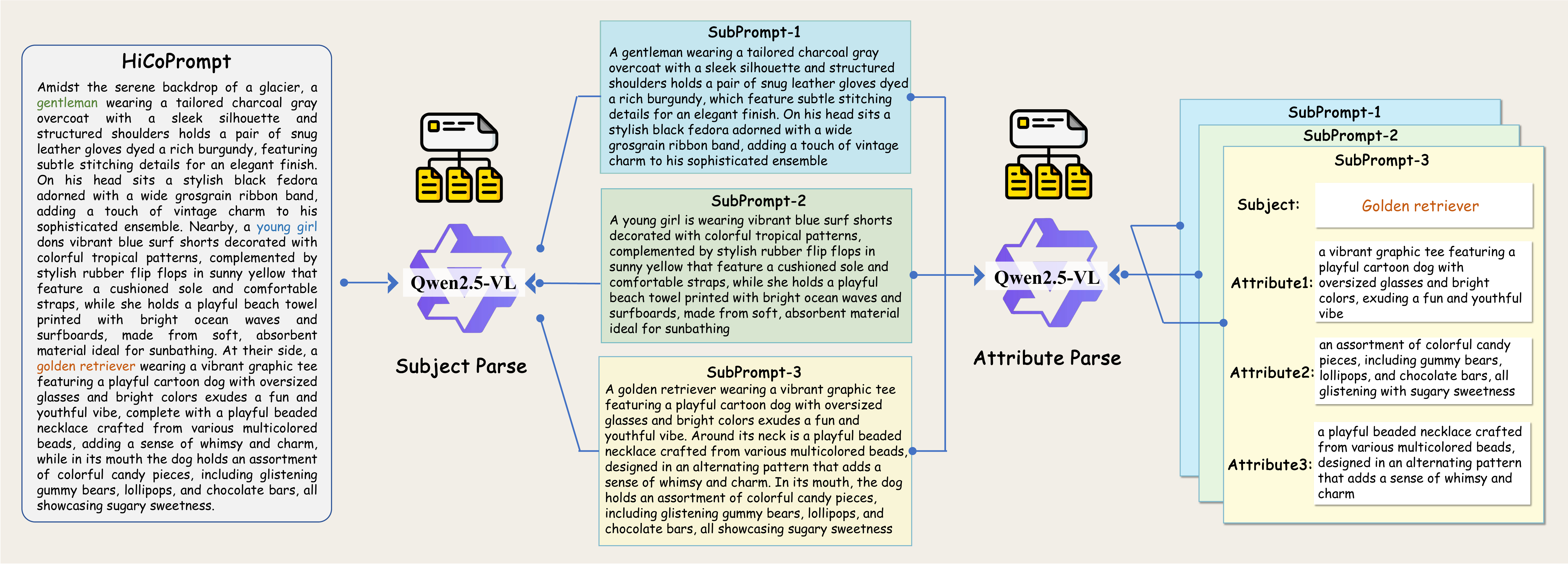}
    \caption{The intermediate prompt of the Chain of Synthesis (zoom in for details).}
    \label{fig:chain_of_synthesis_prompt}
\end{center}

}]

\section{Implementation details}
The weights of the reward are set to $w_\text{clip}=0.7, w_\text{hps}=1.4, w_\text{dino}=0.7, w_\text{vlm}=0.7$.
The size of the trainset and testset is set to 12,000 and 3,000, respectively. The number of de-noise steps is set to 16 when sampling. 
The $\eta_{\max}$ and $\eta_{\min}$ are set to 1.0 and 0.  
The output image size is set to $512\times512$, with the reference image size being $320\times320$. The total training step is set to 300 with a batch size of 12.


\vspace{-2mm}
\section{Chain of Synthesis}
Fig.~\ref{fig:chain_of_synthesis} shows the case of how the HiCoGen performs Chain of Synthesis. The first row showcases the CoS of three subjects in HiCoGen. It first generates two subjects in one intermediate output, and then includes the third subject in the final output. Besides, we also present the scaling results of four subjects in the second row.
The third row shows the result of the HiCoGen on how to generate subjects with hierarchical relationships. For example, it generates clothing and items of the subject, and then generates the final image.
\begin{figure*}
    \vspace{-6mm}
    \begin{subfigure}[b]{\textwidth}
        \centering
        \includegraphics[width=1.0\linewidth]{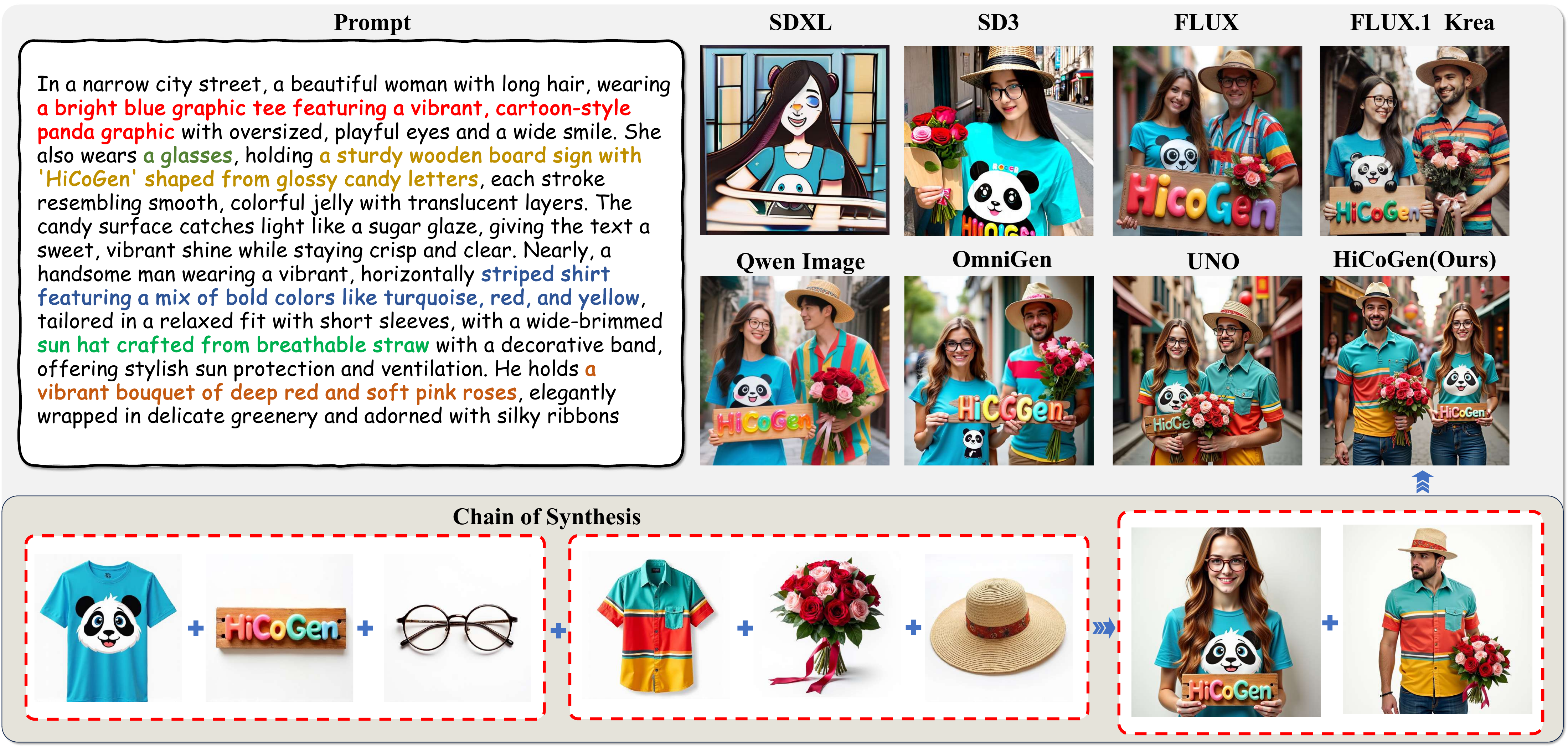}
    \end{subfigure}
    \hfill
    \begin{subfigure}[b]{1\textwidth}
        \centering
        \includegraphics[width=1.0\linewidth]{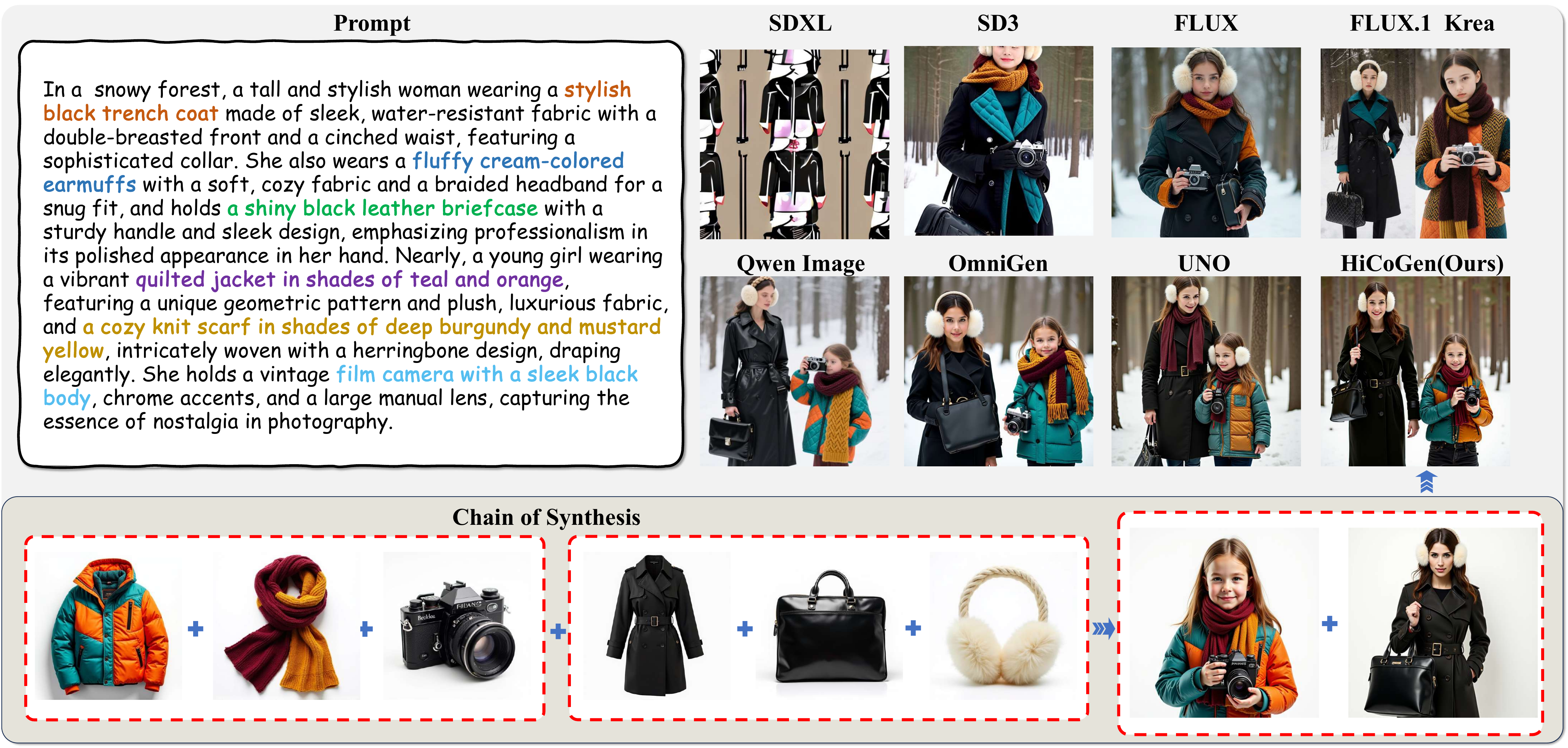}
    \end{subfigure}
    \vspace{-6mm}
    \caption{More visual results of the HiCoGen}
    \label{fig:more_result}
    \vspace{-4mm}
\end{figure*}
Fig.~\ref{fig:chain_of_synthesis_prompt} indicates the whole workflow of the parse and rewrite LLM. For a long and complex HiCoPrompt, it is first parsed 
\newpage
\noindent 
according to subjects (e.g., gentleman, young girl, and golden retriver). 
Then, the sub-prompt is continued to be parsed if it contains multiple attributes that can be generated. Thus, we replace the long and complex prompt with a structured prompt composed of several semantic units.



\vspace{-1mm}
\section{More Visual Results}
Fig.~\ref{fig:more_result} presents additional visual results produced by HiCoGen, as well as the results from other methods. We also illustrate the CoS process involved in the generation. Through CoS, HiCoGen generates all the concepts from the HiCoPrompt.

\clearpage


\begin{figure*}\small

\begin{subfigure}[b]{0.5\textwidth}
\begin{tcolorbox}[
        title={Subject Generation},
        halign=left,
        valign=center,
        nobeforeafter,
        fontupper=\scriptsize
    ]
Role:

Please be very creative and generate 20 groups of components for the given character. 

\vspace{\baselineskip}
Follow these rules:

1. You will be given a $<$character$>$, you need to create its wearing, holding, and accessory items. 

2. These items must exist in the real world (no fantasy or fictional materials). 

3. Do not repeat the same words between outputs; use your imagination and common sense of real life. 

4. Each item must contain **no more than two words**. 

5. Output multiple unique sets if possible.

6. Given a final complete and brief prompt for text-to-image generation

\vspace{\baselineskip}
Output format:

[character]: $<$character name$>$

[clothing1]: $<$real-world clothing$>$

[holding1]: $<$real-world object$>$

[accessory1]: $<$real-world accessory$>$

\vspace{\baselineskip}
Example:

[character]: dog

[clothing1]: superman's costume

[holding1]: sign

[accessory1]: goggles

[brief\_prompt1]: a dog wearing a superman's costume with a goggles, holding a sign

\vspace{\baselineskip}
[clothing2]: space suit

[holding2]: book

[accessory2]: glasses

[brief\_prompt2]: a dog wearing a space suit with glasses, holding a book

 ...
 
 (Up to [asset20])

\vspace{\baselineskip}
 [character]: {character}

\vspace{\baselineskip}

\vspace{\baselineskip}

\vspace{\baselineskip}
\end{tcolorbox}

\end{subfigure}
\hfill
\begin{subfigure}[b]{0.5\textwidth}  
\begin{tcolorbox}[
        title={Attribute Rewrite},
        halign=left,
        valign=center,
        nobeforeafter,
        fontupper=\scriptsize,
    ]
Role:

Please be very creative and generate a detailed prompt for text-to-image generation. 

\vspace{\baselineskip}
Follow these rules:

1. You will be given 3 \{assets\}, you need to create an asset (detailed subject prompt) based on the \{assets\}.

2. These descriptions can refer only to appearance descriptions/or to certain brands. e.g., ``Elon Musk in pajamas'', ``a tiger in a black hat'', ``A Mercedes sports car'', ``A blonde'', ``A rotten wooden door'', ``a book with cover written `Magic'''

3. Do not repeat the same words between outputs; use your imagination and common sense of real life. 

4. Describe this asset in one sentence. No more than 40 words

5. Focus on the asset itself, and must contain the asset in each output.

6. Do not describe its background and environment.

7. You may rewrite it based on its contextual meaning within the original prompt.

8. Do not include any users, wearers, or owners. Describe only the object itself.

\vspace{\baselineskip}
Example:

[input\_asset1]: book

[input\_asset2]: lab coat

[input\_asset3]: necklace

[original\_prompt]: a beautiful woman wearing a lab coat with a necklace and holding a book.

\vspace{\baselineskip}
[output\_asset1]: an open ancient magic book with a thick dark brown tanned leather cover, adorned with hand-embossed golden runes and intricate patterns

[output\_asset2]: a crisp white lab coat with embroidered name and pen-stained pocket

[output\_asset3]: a whimsical necklace with animal-shaped pendants

\vspace{\baselineskip}
Now, please generate the detailed prompt for the following assets:

[input\_asset1]: \{subject1\}

[input\_asset2]: \{subject2\}

[input\_asset3]: \{subject3\}

[original\_prompt]: \{ori\_prompt\}
\end{tcolorbox}
\end{subfigure}

\vspace{\baselineskip}
\begin{subfigure}[b]{\textwidth}  
\begin{tcolorbox}[
        title={Prompt Reframing},
        halign=left,
        valign=center,
        nobeforeafter,
        fontupper=\scriptsize,
    ]
Role:

You are a professional prompt structure optimization and synthesis expert, skilled at merging multiple prompt segments into a single, well-structured and logically coherent prompt.

\vspace{\baselineskip}
Follow these rules:

Generate a unified prompt based on the provided ori\_prompt (original prompt) and subprompts (refined sub-prompts). Please strictly follow these rules:

1. Preserve the internal structure, hierarchy, and semantics of each subprompt.

2. You may adjust sentence structures or logical order slightly during merging to ensure overall fluency and coherence.

3. Do not delete or rewrite any key information from the subprompts.

4. You may adapt the language style according to the tone or purpose of the ori\_prompt (e.g., analysis, creation, reasoning, Q\&A, etc.).

5. The final output should clearly reflect the role and integrated content of each subpart.

6. Output one complete prompt paragraph — no explanations or additional commentary are needed.

7. Do not describe its background and environment.

\vspace{\baselineskip}
[input\_asset1]: \{subject1\}

[input\_asset2]: \{subject2\}

[input\_asset3]: \{subject3\}

[original\_prompt]: \{ori\_prompt\}
\end{tcolorbox}
\end{subfigure}

\end{figure*}

\section{Prompt}

\subsection{Dataset Creation Prompt}
In this section, we demonstrate the prompt used to construct the dataset. First, a subject is randomly selected from a predefined character pool. Then, this subject will be assigned up to three attributes. After that, these attributes are rewritten into a more detailed version. Finally, these prompts are reframed into the final prompts by LLMs.

\clearpage
\subsection{Reward Prompt}
\begin{figure*}[htb]
    
\begin{tcolorbox}[
        title={Subject Reward},
        halign=left,
        valign=center,
        nobeforeafter,
        fontupper=\scriptsize,
    ]
Role

You are an expert AI assistant specializing in the objective evaluation of the consistency of subjects in two images.
Assign a specific integer score to subject. More and larger differences result in a lower score. 

\vspace{\baselineskip}
Important Notes

- Provide quantitative differences in reason whenever possible.

- Ignore differences in the subject's background, environment, position, size, etc.

- Ignore differences in the subject's actions, poses, expressions, viewpoints, additional accessories, etc.

- Ignore the extra accessory of the subject in the second image, such as hat, glasses, etc.

\vspace{\baselineskip}
You must adhere to the output format strictly.

\vspace{\baselineskip}
The score ranges from 0 to 4:

 - 0: No resemblance. The \{subject\} in Image2 does NOT appear in Image1 at all. No matching identifiable features.

 - 1: Minimal resemblance.  The \{subject\} in Image2 appears only minimally in the \{subject\} in Image1. Only trivial or coincidental similarities.

 - 2: Moderate resemblance. The \{subject\} in Image2 partially matches the \{subject\} in Image1. Some major features match, but key identity traits differ.

 - 3: Strong resemblance. The \{subject\} in Image2 is very similar to the \{subject\} in Image1. Most identity traits align, with minor differences.

 - 4: Near-identical. The \{subject\} in Image2 is identical to the \{subject\} in Image1. Nearly identical; same character.

\vspace{\baselineskip}
Output only the brief reason and score. DO NOT OUTPUT any other text. 

\vspace{\baselineskip}
Image1: [brief description of Image1]

Image2: [brief description of Image2]

Reason: [Brief Reason]

Output: [Score]
\end{tcolorbox}
\vspace{\baselineskip}

\begin{tcolorbox}[
        title={Relation Reward},
        halign=left,
        valign=center,
        nobeforeafter,
        fontupper=\scriptsize,
    ]

Role:

You are a visual consistency evaluator. You will compare Image2 and Image1.

\vspace{\baselineskip}
Your task is to check two things:

1) Content Preservation:

   Are the key visual elements from Image2 present in Image1?
   Ignore differences in the subject's background, environment, position, size, etc.

   Examples of key visual elements:
   
   - For a character: clothing type, accessories.
   
   - For clothing: color, pattern, shape, length, structure.
   
   - For a held object: object type, shape

2) Correct Assignment:

   The elements of the character from Image2 must appear on the same character in Image1.
   
   They must not be assigned to another character or misplaced.

\vspace{\baselineskip}
Then classify overall similarity using the scale below:

\vspace{\baselineskip}
0 - Completely mismatched:
    Key elements are missing or assigned to the wrong subject. No meaningful match.

1 - Minimal similarity:
    Only small or generic resemblance. Most defining elements are missing or incorrectly reassigned.

2 - Partial similarity:
    Some important features match, but major elements are missing, altered, or incorrectly assigned.

3 - High similarity:
    Most key features are present and assigned correctly, with only minor inaccuracies.

4 - Fully consistent:
    All key features are preserved and assigned correctly. No confusion or mixing.

\vspace{\baselineskip}
Output only the following format:

\vspace{\baselineskip}
Image1: [brief description of Image1]

Image2: [brief description of Image2]

Reason: [brief explanation of match or mismatch]

Output: [0-4]

\end{tcolorbox}

\end{figure*}

\clearpage
\subsection{Evaluation Prompt}
\begin{figure*}[!htp]
\begin{subfigure}[b]{0.5\textwidth}
\begin{tcolorbox}[
        title={Accuracy of Existing},
        halign=left,
        valign=center,
        nobeforeafter,
        fontupper=\scriptsize,
    ]
Check whether a specific object appears in the image.

\vspace{\baselineskip}

Instruction:

Analyze the provided image carefully. Determine whether the specified item is present or visible in the image.

\vspace{\baselineskip}
Object: \{user\_prompt\}

\vspace{\baselineskip}
Output format:

Answer ``Yes'' if the object is clearly present.
\vspace{\baselineskip}

Answer ``No'' if the object is not visible. Do not assume or infer its presence based on context.

\vspace{\baselineskip}
If uncertain, answer ``Unclear'' and briefly explain why.

\vspace{\baselineskip}
Output only the brief description, reason and answer. DO NOT OUTPUT other text.

\vspace{\baselineskip}
Description: [Brief Description of the Image]

Reason: [Brief Reason]

Output: [Yes / No / Unclear]

\end{tcolorbox}

\end{subfigure}
\hfill
\begin{subfigure}[b]{0.5\textwidth}
\begin{tcolorbox}[
        title={Accuracy of Attribute},
        halign=left,
        valign=center,
        nobeforeafter,
        fontupper=\scriptsize,
    ]

Check whether the object in the image matches the attributes described in the given text.

\vspace{\baselineskip}
Instruction:

Analyze the image carefully. Determine whether the object matches the specific details (such as color, shape, material, pattern, size, part structure, or other explicit attributes) described in the text.
Object description: ``\{user\_prompt\}''

\vspace{\baselineskip}
Output format:

\vspace{\baselineskip}
Answer “Yes” if the object is clearly present and its visible attributes match the given text.

\vspace{\baselineskip}
Answer “No” if the object is present but clearly does not match the described details, or the described object is not visible at all.

\vspace{\baselineskip}
If uncertain, answer “Unclear” and briefly explain why.

\vspace{\baselineskip}
Description: [Brief Description of the Image]

Reason: [Brief Reason]

Output: [Yes / No / Unclear]
\end{tcolorbox}
\end{subfigure}

\vspace{\baselineskip}

\begin{tcolorbox}[
        title={Accuracy of Attribute},
        halign=left,
        valign=center,
        nobeforeafter,
        fontupper=\scriptsize,
    ]

Check whether the specified object in the image interacts with the main subject in a correct and reasonable way according to the following rules.

\vspace{\baselineskip}
Rules:

The object mentioned in the prompt must interact with the main subject in a reasonable, physically consistent way (no floating, no paste-on appearance).

The object must not appear in the hands or possession of another subject.

The object’s size and proportion relative to the main subject must be correct and not distorted.

\vspace{\baselineskip}
Instruction:

Analyze the provided image carefully and judge whether the described object satisfies all the above rules.

Object: ``\{user\_prompt\}''

\vspace{\baselineskip}
Output format:

Output “Yes” if the object is visible and satisfies all three rules.

Output “No” if any rule is violated or the object is not visible.

\vspace{\baselineskip}
Description: [Brief Description of the Image]

Reason: [Brief Reason]

Output: [Yes / No / Unclear]
\end{tcolorbox}

\end{figure*}

\end{document}